\documentclass{article}
\usepackage{andydong_macros}
\usepackage[margin=1in]{geometry}

\title{Less Random, More Private: What is the Optimal Subsampling Scheme for DP-SGD?}

\author{
Andy Dong \\
Stanford University
\and
Ayfer \"Ozg\"ur \\
Stanford University
}

\date{May 2026}

\begin{document}

\maketitle

\begin{abstract}
Poisson subsampling is the default sampling scheme in differentially private machine learning, largely because its unstructured randomness yields tractable privacy amplification analyses. Yet this same randomness introduces substantial participation variance: each sample appears in very different numbers of training iterations. In this work, we show that this variance is not merely a practical artifact to be tolerated, but a fundamental source of suboptimal privacy amplification. We prove that Balanced Iteration Subsampling (BIS), a structured scheme in which each sample participates in exactly a fixed number of iterations, achieves stronger privacy amplification than Poisson subsampling and is optimal at both extremes of the noise spectrum ($\sigma \to 0$ and $\sigma \to \infty$). Our analysis reveals that the privacy-noise tradeoff is governed not by maximizing randomness, but by eliminating participation variance while preserving uniform marginal participation across iterations. To translate this asymptotic theory into finite-noise guarantees, we introduce a practical near-exact Monte Carlo accountant for BIS, which removes the analytical slack of existing RDP and composition-based PLD analyses. Evaluations across more than 60 practical DP-SGD configurations show that BIS consistently outperforms Poisson subsampling in the low-noise regimes most relevant for high-utility private training, reducing the required noise multiplier by up to $9.6\%$. These results overturn the common intuition that more sampling randomness necessarily yields stronger privacy amplification: in DP-SGD, structured participation can be both more practical and more private. Our implementation is available at \url{https://github.com/dong-xin-ao-andy/bis-mc-accountant}.
\end{abstract}

\section{Introduction}

Differential privacy (DP) provides a rigorous framework for protecting individual privacy in machine learning \cite{dwork2006calibrating}. In differentially private stochastic gradient descent (DP-SGD) \cite{abadi2016deep}, random minibatch sampling leads to privacy amplification, making the data subsampling scheme a central driver of the privacy-utility tradeoff.

For the past decade, Poisson subsampling has been the default choice for DP-SGD: each sample participates independently in each iteration with a fixed probability. This choice is motivated by analytical convenience and by the prevailing intuition that unstructured randomness should maximize privacy amplification. Yet the same unconstrained randomness creates undesirable variability in practice. Random batch sizes can complicate system-level implementation \cite{ganesh2025tighter}, while uneven participation counts across samples or users can raise fairness concerns and create unbalanced communication loads in federated learning \cite{dong2025leveraging, feldman2025privacy}.

In this work, we ask: \emph{What is the theoretically optimal data subsampling scheme for DP-SGD?} We study this question within the family of independent-example mechanisms, where each sample draws its participation pattern independently subject to a fixed expected participation count. We prove that, in both the low-noise limit ($\sigma \to 0$) and the high-noise limit ($\sigma \to \infty$), the privacy-optimal strategy is Balanced Iteration Subsampling (BIS) \cite{dong2025leveraging, feldman2025privacy}. In this structured scheme, each sample participates in exactly $k$ out of $T$ iterations, chosen uniformly at random, thereby eliminating participation-count variance across samples.

At first glance, it may seem counterintuitive that a more constrained sampling scheme can amplify privacy more strongly than unconstrained Poisson subsampling. Our analysis resolves this apparent paradox by uncovering the hierarchy of quantities that control the privacy loss. In the low-noise limit, the leading-order term is driven by fluctuations in how often a sample participates, making participation-count variance the primary obstacle to privacy amplification. In the high-noise limit, the leading-order term is governed by the marginal participation probabilities across iterations. BIS is optimal in both regimes because it eliminates participation-count variance while preserving uniform marginals across time.

While this asymptotic theory provides a clear conceptual explanation for the advantage of BIS, turning this insight into practical gains requires tightly evaluating its privacy amplification at finite noise levels $\sigma$. Exact $(\varepsilon,\delta)$ accounting for BIS is computationally intractable, and existing approaches have therefore relied on relaxations through R\'enyi differential privacy (RDP) \cite{dong2025leveraging, mironov2017renyi} or composition-based privacy loss distributions (PLDs) \cite{feldman2026efficient, doroshenko2022connect}. These reductions introduce analytical slack, obscuring the true privacy amplification of BIS and masking its advantage over Poisson subsampling.

To bridge the gap between asymptotic theory and practical verification, we introduce a near-exact Monte Carlo accountant for BIS. We bypass the combinatorial explosion by designing a dynamic programming formulation that reduces exact likelihood-ratio evaluation from $\mc{O}\big(\binom{T}{k}\big)$ to $\mc{O}(Tk)$ time. We further combine this with a novel $\mc{O}(T)$ screening upper bound that serves as a computational filter: it immediately discards the vast majority of sampled realizations and invokes the exact computation only on the remaining $\mc{O}(\delta)$ fraction of samples.

This algorithm allows us to remove the slack of prior bounds and reveal the true privacy-noise profile of BIS. We show that BIS consistently outperforms Poisson subsampling in low-noise regimes, the practical setting most relevant to utility and deployment, yielding up to a 9.6\% reduction in the required noise multiplier. In the high-noise limit, BIS matches the performance of Poisson subsampling.

In summary, our contributions are threefold:
\begin{itemize}
\item \textbf{Theoretical optimality:} We prove that BIS is optimal at both extremes of the noise spectrum ($\sigma \to 0$ and $\sigma \to \infty$) among independent-example subsampling schemes.

\item \textbf{Near-exact finite-noise accounting:} We introduce a practical Monte Carlo accountant for BIS, powered by an $\mc{O}(Tk)$ dynamic program and an $\mc{O}(T)$ screening bound, which overcomes the combinatorial intractability of the $k$-out-of-$T$ sampling scheme and removes the slack in prior bounds.

\item \textbf{Empirical privacy gains:} By eliminating the analytical looseness of RDP and composition-based PLD methods, our near-exact evaluation shows that BIS is a principled better alternative to Poisson subsampling across realistic DP-SGD configurations.
\end{itemize}

\paragraph{Code Availability.} The complete Python implementation of our Monte Carlo accountant is publicly available at \url{https://github.com/dong-xin-ao-andy/bis-mc-accountant}.

\section{Background}

\subsection{Differential Privacy}

Let $\mc{X}$ denote the data domain and let $\mc{M}$ be a randomized algorithm mapping datasets to outputs. We consider the zero-out adjacency relation. Two datasets $D, D' \subseteq \mc{X}$ are adjacent if $D'$ can be obtained from $D$ by replacing one element of $D$ with a special record $\perp$ that always contributes zero gradient.

Let $P$ and $Q$ denote the distributions of $\mc{M}(D)$ and $\mc{M}(D')$, respectively. The mechanism $\mc{M}$ satisfies $(\varepsilon, \delta)$-DP if the $e^\epsilon$-hockey-stick divergence between $P$ and $Q$ is bounded by $\delta$, i.e.,
\begin{equation}
\label{eq:DP_definition}
H_{e^\varepsilon}(P \;\Vert\; Q)
= \mathbb{E}_{y \sim P} \left[ \max \left\{ 1 - e^\varepsilon \frac{Q(y)}{P(y)}, \: 0 \right\} \right ]
= \mathbb{E}_{y \sim Q} \left[ \max \left\{ \frac{P(y)}{Q(y)} - e^\varepsilon, \: 0 \right\} \right] \leq \delta.
\end{equation}

\subsection{Monte Carlo Privacy Accounting}
\label{sec:background_mc_accounting}

Monte Carlo accounting is a technique in privacy analysis that certifies guarantees by sampling and constructing a statistical confidence interval \cite{chua2024balls}. Suppose a mechanism $\mc{M}$ produces outputs distributed as $P$ on dataset $D$ and as $Q$ on a neighboring dataset $D'$. Computing their hockey-stick divergence analytically is often intractable for complex subsampling mechanisms. Monte Carlo accounting instead draws many outputs from one of the distributions and empirically evaluates the likelihood ratio between the two. We can obtain an unbiased estimate of \eqref{eq:DP_definition} by drawing samples $y_1,\ldots,y_s \sim P$ and computing
\begin{equation}
\hat{\delta} = \frac{1}{s} \sum_{i=1}^{s} \max\left\{1 - e^\varepsilon \frac{Q(y_i)}{P(y_i)},\,0\right\}.
\end{equation}

When the likelihood ratio $P(y)/Q(y)$ (or its inverse) can be evaluated efficiently for all $y$, this estimator provides a highly practical method for computing near-exact privacy guarantees. The fundamental computational challenge is therefore the efficient evaluation of this likelihood ratio.

It is important to note that the Monte Carlo estimation alone does not directly provide a formal privacy guarantee because the empirical estimate $\hat{\delta}$ is a random variable. To address this issue and provide rigorous bounds, prior work proposed the Estimate–Verify–Release (EVR) framework \cite{wang2023randomized}. EVR uses Monte Carlo estimation as a boolean verification step for a candidate privacy parameter and derives a valid, end-to-end guarantee by absorbing the failure probability of the statistical verifier into the overall $\delta$ budget.

Furthermore, a recent modification of this framework removes the possibility of returning an empty output when the mechanism does not pass the verifier, allowing the noise parameter to be tuned jointly with the verification step \cite{dong2026privacy}. In our evaluation, we adopt this modified EVR Monte Carlo accounting framework and focus entirely on solving the core computational bottleneck, which is efficiently evaluating the exact likelihood ratio for Balanced Iteration Subsampling so that the near-exact Monte Carlo guarantees can be deployed.

We note that although Monte Carlo verification requires sampling many realizations, it does not present a practical bottleneck for large-scale training. The number of samples required is typically $\mc{O}(\log(1/\delta) / \delta)$, where $1 / \delta$ is on the same order as the dataset size. Recent scaling laws for DP-SGD suggest that the compute required to saturate model utility grows quadratically with dataset size \cite{mckenna2025scaling}. Consequently, at scale where one may expect a large accounting overhead, the relative overhead actually shrinks compared to the total training cost. Furthermore, because MC accounting is parallelizable, it executes highly efficiently on readily available CPUs and does not compete for the scarce accelerator resources (GPUs/TPUs) required for the actual model training.

\section{Optimal Subsampling for Privacy Amplification}
\label{sec:optimality}

To maximize privacy amplification in DP-SGD, we analyze the subsampling mechanism from first principles. To ensure a rigorous search for the optimal scheme, we first define the design space.

\begin{definition}[Independent-Example Subsampling Family]
\label{def:indep_example}
Let a training procedure consist of $T$ total iterations, and let $k \in (0, T)$ be the expected participation count. We define $\mc{P}_{T,k}$ as the family of all probability distributions $P$ over the Boolean hypercube $\{0, 1\}^T$ satisfying the expected participation count constraint
\begin{equation}
\mathbb{E}_{x \sim P} \|x\|_2^2 = \sum_{x \in \{0,1\}^T} P(x) \|x\|_2^2 = k.
\end{equation}
We define the \textit{independent-example subsampling schemes} as the set of batch-construction mechanisms where the participation vector $x^{(i)}$ for each example $i$ is drawn independently and identically from some $P \in \mc{P}_{T,k}$.
\end{definition}

This family captures all mechanisms where examples participate independently from each other with a fixed expected count (e.g., Poisson subsampling, where each iteration is selected independently with probability $k/T$). Since the differing example introduces exactly one independent participation vector $x \sim P$, the mechanism's worst-case privacy loss is uniquely characterized by $P$. Our goal is to find the optimal $P \in \mc{P}_{T,k}$ that maximizes utility or minimizes the $(\varepsilon, \delta)$ privacy leakage.

We fix the expected participation count $k$ to isolate the structural impact of the sampling mechanism. As observed in prior work \cite{dong2025leveraging, dong2026privacy}, when the expected participation count and noise level are held constant, structurally symmetric and well-distributed participation patterns (such as Poisson and Balanced Iteration Subsampling) yield indistinguishable model utility and mean squared error in the gradient prefix sum. Thus, by restricting our search to the family $\mc{P}_{T,k}$ (with a focus on schemes that do not arbitrarily concentrate participations), we can decouple utility from our analysis and optimize for the lowest noise multiplier.

To evaluate the privacy guarantees of a candidate mechanism $P \in \mc{P}_{T,k}$, we analyze its privacy loss random variable. For a realized mechanism output $y$, the privacy loss is defined as the log-likelihood ratio $L(y) = \log(P(y)/Q(y))$. Under the adjacent dataset lacking the differing example, the output is purely Gaussian noise, yielding the density
\[Q(y) \propto \exp \left(-\frac{\norm{y}_2^2}{2\sigma^2} \right).\]
Under the dataset containing the differing example, the output is a Gaussian mixture over all possible participation vectors $\tilde{x} \sim P$, giving
\[P(y) \propto \sum_{\tilde{x}} P(\tilde{x}) \exp \left( - \frac{\norm{y - \tilde{x}}_2^2}{ 2\sigma^2} \right).\]
We note that this reduction to multivariate Gaussian mechanisms is standard in DP accounting analysis. This represents the worst-case scenario for privacy loss under zero-out adjacency, as any gradient with a smaller norm or varying direction would have a privacy profile dominated by this worst case \cite{choquette2023privacy}.

Expanding the squared Euclidean norm in $P(y)$ yields
\[\norm{y - \tilde{x}}_2^2 = \norm{y}_2^2 - 2\inner{\tilde{x}, y} + \norm{\tilde{x}}_2^2.\]
When taking the ratio $P(y)/Q(y)$, the common $\norm{y}_2^2$ factors cancel. The log-likelihood ratio, evaluated at an arbitrary output $y$, can therefore be written as
\begin{equation}
\label{eq:l_y_full}
L(y) = \log\left( \E_{\tilde{x} \sim P}\left[ \exp\left( \frac{2\inner{\tilde{x}, y} - \norm{\tilde{x}}_2^2}{2\sigma^2} \right) \right] \right).
\end{equation}

\subsection{Deriving the Low-Noise Optimum (\texorpdfstring{$\sigma \to 0$}{sigma -> 0})}
\label{sec:low_noise_optimality}

With the exact privacy loss established, we first consider the small-noise regime ($\sigma \to 0$), which corresponds to high utility and large privacy budgets—the regime most relevant to practical model deployment.

To evaluate the $(\varepsilon, \delta)$-DP guarantee, we use the hockey-stick divergence identity in \eqref{eq:DP_definition} where the expectation is over $y \sim P$. Under $P$, the output is generated as $y = x + \sigma w$, where the true participation vector $x \sim P$ and the standard normal noise $w \sim \Normal(0, I_T)$ is independent of $\sigma$. As $\sigma \to 0$, we can isolate the dominant term of the mixture in \eqref{eq:l_y_full}.

\begin{proposition}[Low-Noise Privacy Loss]
\label{prop:low_noise_loss}
As $\sigma \to 0$, the privacy loss random variable $L(y)$ evaluated at $y = x + \sigma w$ is asymptotically dominated by the term corresponding to the true sampled vector $x$, yielding
\begin{equation}
\label{eq:privacy_loss_low_noise}
L(y) \longrightarrow \frac{\norm{x}_2^2}{2\sigma^2} + \frac{\inner{x, w}}{\sigma} + \log P(x).
\end{equation}
\end{proposition}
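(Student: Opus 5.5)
We substitute $y = x + \sigma w$ into \eqref{eq:l_y_full} and expand each exponent. For a candidate $\tilde{x}$ with $P(\tilde{x})>0$,
\[
\frac{2\inner{\tilde{x}, x+\sigma w} - \norm{\tilde{x}}_2^2}{2\sigma^2}
= \frac{2\inner{\tilde{x},x} - \norm{\tilde{x}}_2^2}{2\sigma^2} + \frac{\inner{\tilde{x}, w}}{\sigma}.
\]
Completing the square gives $2\inner{\tilde{x},x} - \norm{\tilde{x}}_2^2 = \norm{x}_2^2 - \norm{x-\tilde{x}}_2^2$. We can therefore factor out the term belonging to the true vector $x$ and write
\[
L(y) = \frac{\norm{x}_2^2}{2\sigma^2} + \frac{\inner{x,w}}{\sigma} + \log P(x) + \log\Bigl(1 + \sum_{\tilde{x}\neq x} \frac{P(\tilde{x})}{P(x)} \exp\Bigl(-\frac{\norm{x-\tilde{x}}_2^2}{2\sigma^2} + \frac{\inner{\tilde{x}-x, w}}{\sigma}\Bigr)\Bigr).
\]
This identity is exact. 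The proposition then reduces to showing that the final logarithmic remainder $R_\sigma$ tends to $0$.

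To control $R_\sigma$, we use that all vectors lie in $\{0,1\}^T$. For $\tilde{x}\neq x$ we therefore have $\norm{x-\tilde{x}}_2^2 \ge 1$, and $\abs{\inner{\tilde{x}-x,w}} \le \sqrt{T}\norm{w}_2$. The condition $P(x)>0$ holds almost surely, since $x\sim P$. For fixed $x$ and $w$, each summand is then at most
\[
\frac{1}{P(x)}\exp\bigl(-1/(2\sigma^2) + \sqrt{T}\norm{w}_2/\sigma\bigr),
\]
and there are finitely many summands (at most $2^T$). The sum is therefore at most $\frac{2^T}{P(x)} e^{-1/(2\sigma^2)+\sqrt{T}\norm{w}/\sigma}\to 0$, so $R_\sigma \le \log(1+\text{that})\to 0$. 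The remainder is also nonnegative. This gives almost-sure, hence in-probability, convergence of $L(y)$ minus the stated expression to zero, which is the intended meaning of ``$\longrightarrow$'' (the stated expression itself diverges, so the statement is about the difference).

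The main subtlety is the precise sense of convergence. Because the leading terms blow up, we will state the result as $L(y) - \bigl(\frac{\norm{x}_2^2}{2\sigma^2} + \frac{\inner{x,w}}{\sigma} + \log P(x)\bigr) \to 0$ almost surely in $(x,w)$. The key observation is that the quadratic gap $1/(2\sigma^2)$ dominates the linear noise term $\sqrt{T}\norm{w}_2/\sigma$ for every fixed realization of $w$. If a uniform or in-expectation statement is needed later for the hockey-stick analysis, the same explicit bound can be combined with Gaussian tail bounds on $\norm{w}_2$. In that case one would apply dominated convergence with $R_\sigma \le \log(1+2^T/P(x)) + \sqrt{T}\norm{w}_2/\sigma$, suitably rescaled. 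We expect this step, rather than the algebra above, to be the only part requiring care.
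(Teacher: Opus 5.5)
Your proof is correct and follows the paper's route. You factor out the true vector's term and use $\norm{x-\tilde{x}}_2^2 \ge 1$ on $\{0,1\}^T$, so the deterministic $\Theta(1/\sigma^2)$ gap beats the $\Theta(1/\sigma)$ noise cross-term; your exact identity and explicit remainder bound replace the paper's appeal to Laplace's principle and make precise that it is the difference, including the $O(1)$ term $\log P(x)$, that tends to zero. One small fix to your optional aside: the $\sigma$-uniform dominating function is $\log\bigl(1+2^T/P(x)\bigr) + T\norm{w}_2^2/2$, obtained by maximizing $-1/(2\sigma^2)+\sqrt{T}\norm{w}_2/\sigma$ over $\sigma$, because the bound containing $\sqrt{T}\norm{w}_2/\sigma$ blows up as $\sigma \to 0$.
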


\begin{proof}[Proof of Proposition~\ref{prop:low_noise_loss}]
To understand which term dominates the sum in \eqref{eq:l_y_full}, we explicitly write the gap between the exponent evaluated at the true sampled vector $x$ and the exponent evaluated at any false candidate $\tilde{x} \neq x$,
\begin{align}
&\phantom{{}={}} \left( \frac{2 \inner{x, x} - \norm{x}_2^2}{2\sigma^2} + \frac{\inner{x, w}}{\sigma} \right) - \left( \frac{2\inner{\tilde{x}, x} - \norm{\tilde{x}}_2^2}{2\sigma^2} + \frac{\inner{\tilde{x}, w}}{\sigma} \right) \nonumber \\
&= \left( \frac{\norm{x}_2^2}{2\sigma^2} + \frac{\inner{x, w}}{\sigma} \right) - \left( \frac{2\inner{\tilde{x}, x} - \norm{\tilde{x}}_2^2}{2\sigma^2} + \frac{\inner{\tilde{x}, w}}{\sigma} \right) \nonumber \\
&= \frac{\norm{x}_2^2 - 2\inner{x, \tilde{x}} + \norm{\tilde{x}}_2^2}{2\sigma^2} + \frac{\inner{x - \tilde{x}, w}}{\sigma} \nonumber \\
&= \frac{\norm{x - \tilde{x}}_2^2}{2\sigma^2} + \frac{\inner{x - \tilde{x}, w}}{\sigma}.
\end{align}
Because $x$ and $\tilde{x}$ are discrete binary vectors and $\tilde{x} \neq x$, the squared Euclidean distance satisfies $\norm{x - \tilde{x}}_2^2 \ge 1$. Consequently, as $\sigma \to 0$, the deterministic $\Theta(1/\sigma^2)$ term dominates the stochastic $\Theta(1/\sigma)$ noise term, and Laplace's principle (from large deviations theory) establishes that the logarithm of a sum of exponentials is asymptotically equivalent to the maximum exponent \cite{dembo2009large}. Because the gap between the exponent for the true mode $\tilde{x} = x$ and any false candidate $\tilde{x} \neq x$ diverges to $+\infty$, the log-sum-exp in \eqref{eq:l_y_full} is entirely dominated by the true sampled vector. Factoring out this dominant mode, we get a simple expression for the privacy loss in the limit $\sigma \to 0$,
\begin{equation}
L(y) \longrightarrow \frac{\norm{x}_2^2}{2\sigma^2} + \frac{\inner{x, w}}{\sigma} + \log P(x).
\end{equation}
\end{proof}

To understand how this dictates the optimal subsampling scheme, we connect the privacy loss $L(y)$ to the hockey-stick divergence in \eqref{eq:DP_definition}. For a given $\varepsilon$, this is governed by the upper tail of the privacy loss distribution. Therefore, minimizing $\delta(\varepsilon)$ for fixed $\varepsilon$ amounts to suppressing the probability that $L(y)$ takes large values.

The asymptotic expression for $L(y)$ in \eqref{eq:privacy_loss_low_noise} reveals three terms of decreasing order as $\sigma \to 0$: a $\Theta(1/\sigma^2)$ term driven by $\norm{x}_2^2$, a $\Theta(1/\sigma)$ fluctuation term driven by $\inner{x, w}$, and a $\Theta(1)$ constant term given by the log-probability $\log P(x)$ assigned to each participation vector $x$. This decomposition suggests a natural hierarchy for controlling the upper tail of $L(y)$: first minimizing the leading term, followed by the second, and then the third.

\textbf{First Layer of Optimization: Eliminating Participation Variance.} To minimize the first term, note that its expectation is fixed at $k / (2\sigma^2)$ for the class of subsampling schemes under consideration (Definition~\ref{def:indep_example}). Hence, the upper tail of $L(y)$ is minimized when this term exhibits no variation around its mean. This can be achieved by enforcing $\norm{x}_2^2 = k$ with probability one.\footnote{We assume $k \in \mathbb{Z}$. For non-integer $k$, variance is rigorously minimized by sampling from $\{\lfloor k \rfloor, \lceil k \rceil\}$, though simply selecting an integer $k$ is a practical alternative with negligible impact on training.} Any variance in $\norm{x}_2^2$ induces fluctuations of order $\Theta(1/\sigma^2)$, which dominate both the $\Theta(1/\sigma)$ fluctuations from the second term and the $\Theta(1)$ log-probability term.

\textbf{Second Layer of Optimization: Invariance of First-Order Noise.} Next, consider minimizing the contribution of $\inner{x, w}$ among all distributions $P$ satisfying $\norm{x}_2^2 = k$. For any such $x$, the inner product $\inner{x, w}$ is a zero-mean Gaussian random variable with variance $k$. Therefore, the choice of $P$ does not affect the distribution of this term.

\textbf{Third Layer of Optimization: Maximizing the Entropy.} Finally, consider the log-probability term $\log P(x)$. Its expectation is given by $\E_{x \sim P}[\log P(x)] = -H(P)$, the entropy of the subsampling scheme. Thus, controlling the upper tail of $\log P(x)$ amounts to decreasing its mean and reducing its variability. Both objectives are achieved by maximizing the entropy of $P$. This is attained by choosing $P$ to be the uniform distribution over all $\binom{T}{k}$ binary vectors with $k$ ones and $T-k$ zeros. This choice simultaneously maximizes $H(P)$ among schemes with fixed participation $\norm{x}_2^2 = k$, and eliminates variation in $\log P(x)$. In contrast, more structured subsampling schemes reduce entropy and are therefore suboptimal. For example, enforcing exactly one participation within each block of $T/k$ iterations restricts the support to $(T/k)^k$ configurations. Since $(T/k)^k \ll \binom{T}{k}$, such constraints significantly reduce randomness, as captured by $H(P)$, leading to a weaker privacy guarantee.

This argument helps clarify a potential misconception. One might a priori expect that maximizing the randomness in the subsampling scheme would lead to maximal amplification, an intuition that emphasizes maximizing $H(P)$. The analysis above, however, reveals a clear hierarchy: in the low-noise limit, controlling the upper tail of the privacy loss through the variability of $\norm{x}_2^2$ is significantly more important than maximizing $H(P)$, which only governs the third-order term. In particular, Poisson subsampling trades a potentially significant $\Theta(1/\sigma^2)$ deviation (arising from the right tail of the binomial distribution of $\norm{x}_2^2$) for a more modest $\Theta(1)$ gain in entropy. From this perspective, achieving optimality requires going beyond unconstrained randomness and instead carefully controlling participation variance.

\subsection{Formalizing Balanced Iteration Subsampling}

The optimal distribution we have just derived---forcing exactly $k$ participations and choosing among all valid combinations uniformly at random---corresponds exactly to a mechanism recently proposed in the literature under the names Balanced Iteration Subsampling \cite{dong2025leveraging} and Random Allocation \cite{feldman2025privacy}. 

Because it is derived as the theoretical optimum for the utility-favorable $\sigma \to 0$ regime, we adopt this mechanism for our primary analysis. Formally, for each example $i \in [n]$, we independently choose a subset $A_i \subseteq [T]$ of size $k$ uniformly at random. The example $i$ is included in iteration $t$ if and only if $t \in A_i$. For each iteration $t$, the batch used by the training algorithm is defined as $S_t = \{\, i \in [n] \st t \in A_i \,\}$. Algorithm~\ref{alg:bis} provides the pseudocode for implementing BIS. 

\begin{algorithm}[htbp]
\caption{Balanced Iteration Subsampling}
\label{alg:bis}
\begin{algorithmic}[1]
\State \textbf{Input:} number of iterations $T$, participation count $k$, dataset indices $[n]$
\State \textbf{Output:} batches $S_1,\dots,S_T$
\State $S_t \leftarrow \emptyset$ for all $t \in [T]$
\For{$i = 1,\dots,n$}
    \State sample $A_i \subseteq [T]$ uniformly at random with $\abs{A_i} = k$
    \For{each $t \in A_i$}
        \State $S_t \leftarrow S_t \cup \{i\}$
    \EndFor
\EndFor
\State \Return $S_1,\dots,S_T$
\end{algorithmic}
\end{algorithm}

This procedure satisfies the conditions of $\mc{P}_{T,k}$. When $k = 1$, the mechanism reduces to the classic balls-and-bins (or 1-out-of-$T$) sampling model \cite{chua2024balls}. Because privacy accounting for this $k = 1$ regime has been extensively studied in prior work \cite{chua2024balls, feldman2026efficient}, our analysis focuses on the generalized case where $k > 1$.

\subsection{Optimality in the High-Noise Limit (\texorpdfstring{$\sigma \to \infty$}{sigma -> infinity})}
\label{sec:high_noise_optimality}

We now consider the high-noise limit ($\sigma \to \infty$) using the hockey-stick divergence identity in \eqref{eq:DP_definition} where the expectation is over $y \sim Q$. Under $Q$, the output is generated as $y = \sigma w$ where $w \sim \Normal(0, I_T)$ is independent of $\sigma$.

\begin{proposition}[High-Noise Privacy Loss]
\label{prop:high_noise_loss}
Let $\mu = \E_{x \sim P}[x]$ denote the expected participation vector, and $\Sigma = \E[xx^T] - \mu\mu^T$ denote its covariance matrix. As $\sigma \to \infty$, the privacy loss $L(y)$ expands as
\begin{equation}
\label{eq:asymptotic_L}
L(y) = \frac{\inner{\mu, w}}{\sigma} + \frac{w^T \Sigma w}{2\sigma^2} - \frac{k}{2\sigma^2} + \mc{O}(1/\sigma^3).
\end{equation}
\end{proposition}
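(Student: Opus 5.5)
Substitute $y = \sigma w$ into \eqref{eq:l_y_full}, which gives
\[
L(y) = \log \E_{\tilde{x}\sim P}\left[\exp(Z(\tilde{x}))\right], \qquad Z(\tilde{x}) = \frac{\inner{\tilde{x}, w}}{\sigma} - \frac{\norm{\tilde{x}}_2^2}{2\sigma^2}.
\]
For fixed $w$, the exponent $Z(\tilde{x})$ is uniformly small. Since $\tilde{x}\in\{0,1\}^T$, we have $|Z(\tilde{x})| \le \sqrt{T}\norm{w}_2/\sigma + T/(2\sigma^2)$, which tends to zero as $\sigma\to\infty$. The plan is therefore a cumulant (log-moment-generating-function) expansion in the small parameter $1/\sigma$, done pointwise in $w$. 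The $\mc{O}(1/\sigma^3)$ remainder will carry a constant depending polynomially on $\norm{w}_2$ and $T$. That is the natural reading of the statement, since $w$ is a fixed standard normal vector independent of $\sigma$.

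\textbf{Step 1: expand the moment generating function.} Expand $e^{Z} = 1 + Z + Z^2/2 + R$ with $|R| \le |Z|^3 e^{|Z|}/6$, and take expectations over $\tilde{x}\sim P$. The first moment is
\[
\E[Z] = \frac{\inner{\mu, w}}{\sigma} - \frac{\E\norm{\tilde{x}}_2^2}{2\sigma^2} = \frac{\inner{\mu,w}}{\sigma} - \frac{k}{2\sigma^2},
\]
using the constraint $\E\norm{\tilde{x}}_2^2 = k$ from Definition~\ref{def:indep_example}. For the second moment, $Z^2 = \inner{\tilde{x},w}^2/\sigma^2 + \mc{O}(1/\sigma^3)$, so
\[
\E[Z^2] = \frac{w^T \E[\tilde{x}\tilde{x}^T] w}{\sigma^2} + \mc{O}(1/\sigma^3).
\]
The remainder satisfies $\E|R| = \mc{O}(1/\sigma^3)$ by the uniform bound on $|Z|$.

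\textbf{Step 2: take the logarithm.} Write $\E[e^Z] = 1 + u$ with $u = \E[Z] + \tfrac12 \E[Z^2] + \mc{O}(\sigma^{-3})$, and apply $\log(1+u) = u - u^2/2 + \mc{O}(u^3)$. Since $u = \mc{O}(1/\sigma)$, the cube term is $\mc{O}(1/\sigma^3)$. To order $1/\sigma^2$,
\[
u^2 = \frac{\inner{\mu,w}^2}{\sigma^2} = \frac{w^T\mu\mu^T w}{\sigma^2}.
\]
Collecting terms,
\[
L(y) = \frac{\inner{\mu,w}}{\sigma} - \frac{k}{2\sigma^2} + \frac{w^T(\E[\tilde{x}\tilde{x}^T] - \mu\mu^T)w}{2\sigma^2} + \mc{O}(1/\sigma^3),
\]
which is exactly \eqref{eq:asymptotic_L} with $\Sigma = \E[\tilde{x}\tilde{x}^T] - \mu\mu^T$. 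In other words, this is the standard fact that the log-MGF equals mean plus half the variance plus third-order cumulant terms. Applied to the random variable $\inner{\tilde{x},w}/\sigma$, it produces the covariance term, and the deterministic-mean shift $-\norm{\tilde{x}}_2^2/(2\sigma^2)$ contributes $-k/(2\sigma^2)$.

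\textbf{Main obstacle: remainder bookkeeping.} The only delicate point is checking that every discarded term is genuinely $\mc{O}(\sigma^{-3})$ with constants controlled by $\norm{w}_2$. Three kinds of term are dropped:
\begin{itemize}
\item the cross term $\inner{\tilde{x},w}\norm{\tilde{x}}_2^2/\sigma^3$ inside $Z^2$;
\item the cubic Taylor remainder $R$ of the exponential;
\item the $u^2$ and $u^3$ corrections from the logarithm.
\end{itemize}
I would handle all of them by working on the event $\sqrt{T}\norm{w}_2 \le \sigma^{1/2}$, which has probability tending to one, where $|Z| \le 1$ so the Taylor bounds hold with absolute constants. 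Alternatively, I would simply state the expansion pointwise for each fixed $w$. This is enough for the subsequent high-noise analysis, where the expansion will be used inside the hockey-stick expectation.
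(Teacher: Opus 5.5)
Your proposal is correct and follows the paper's proof essentially step for step. Both substitute $y=\sigma w$, Taylor-expand the exponential to second order, use $\mathbb{E}\|x\|_2^2=k$ to obtain the $-k/(2\sigma^2)$ term, apply $\log(1+u)=u-u^2/2+\mathcal{O}(u^3)$, and recognize $\mathbb{E}[xx^T]-\mu\mu^T=\Sigma$; your uniform bound $|Z|\le\sqrt{T}\|w\|_2/\sigma+T/(2\sigma^2)$ and the explicit pointwise-in-$w$ reading of the $\mathcal{O}(1/\sigma^3)$ remainder just make rigorous the remainder control that the paper leaves implicit.
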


\begin{proof}[Proof of Proposition~\ref{prop:high_noise_loss}]
We substitute $y = \sigma w$ into \eqref{eq:l_y_full} to get
\begin{equation}
\frac{P(y)}{Q(y)} = \E_{x \sim P}\left[ \exp\left( \frac{2\inner{x, \sigma w} - \norm{x}_2^2}{2\sigma^2} \right) \right] = \E_{x \sim P}\left[ \exp\left( \frac{\inner{x, w}}{\sigma} - \frac{\norm{x}_2^2}{2\sigma^2} \right) \right].
\end{equation}
In the limit as $\sigma \to \infty$, we take the Taylor expansion of the exponential function, using $\exp(u) = 1 + u + u^2/2 + \mc{O}(u^3)$,
\begin{equation}
\frac{P(y)}{Q(y)} = \E_{x \sim P}\left[ 1 + \left( \frac{\inner{x, w}}{\sigma} - \frac{\norm{x}_2^2}{2\sigma^2} \right) + \frac{1}{2}\left( \frac{\inner{x, w}}{\sigma} \right)^2 + \mc{O}(1/\sigma^3) \right].
\end{equation}
Let $\mu = \E_{x \sim P}[x]$ denote the expected participation vector. Pushing the expectation through the linear terms and using the fact that $\E[\norm{x}_2^2] = k$ for all schemes in $\mc{P}_{T,k}$, we obtain
\begin{equation}
\frac{P(y)}{Q(y)} = 1 + \frac{\inner{\mu, w}}{\sigma} + \frac{\E[\inner{x, w}^2] - k}{2\sigma^2} + \mc{O}(1/\sigma^3).
\end{equation}
To isolate the privacy loss random variable $L(y) = \log(P(y)/Q(y))$, we apply the logarithmic expansion $\log(1 + u) = u - u^2/2 + \mc{O}(u^3)$ where $u = P(y)/Q(y) - 1$. Substituting this yields
\begin{equation}
L(y) = \frac{\inner{\mu, w}}{\sigma} + \frac{\E[\inner{x, w}^2] - \inner{\mu, w}^2}{2\sigma^2} - \frac{k}{2\sigma^2} + \mc{O}(1/\sigma^3).
\end{equation}
Using the covariance matrix of the participation vector $\Sigma = \E[xx^T] - \mu\mu^T$, we can rewrite the numerator of the second-order term as a quadratic form:
\begin{equation}
\E[\inner{x, w}^2] - \inner{\mu, w}^2 = w^T \E[xx^T] w - w^T \mu\mu^T w = w^T \Sigma w.
\end{equation}
Substituting this back yields the final asymptotic expression:
\begin{equation}
L(y) = \frac{\inner{\mu, w}}{\sigma} + \frac{w^T \Sigma w}{2\sigma^2} - \frac{k}{2\sigma^2} + \mc{O}(1/\sigma^3).
\end{equation}
\end{proof}

This asymptotic expression reveals a clear, different hierarchy for controlling the upper tail of $L(y)$, mirroring the layered optimization approach of the low-noise regime. To minimize the right tail of $L(y)$---which dictates the hockey-stick divergence $\delta(\varepsilon)$---we must first minimize the tail of the dominant $\Theta(1/\sigma)$ stochastic term, followed by the $\Theta(1/\sigma^2)$ second-order term.

\textbf{First Layer of Optimization: Uniform Marginals.} 
In the limit $\sigma \to \infty$, the dominant term in \eqref{eq:asymptotic_L} is $\inner{\mu, w} / \sigma$. Because $w \sim \Normal(0, I_T)$, this term is distributed as $\Normal(0, \norm{\mu}_2^2 / \sigma^2)$. To suppress its right tail, we simply minimize $\norm{\mu}_2^2$. 

Because $x \in \{0, 1\}^T$, the constraint on the expected participation count (Definition~\ref{def:indep_example}) dictates that $\sum_{t=1}^T \mu_t = k$. By the Cauchy-Schwarz inequality,
\begin{equation}
\norm{\mu}_2^2 \ge \frac{1}{T} \left( \sum_{t=1}^T \mu_t \right)^2 = \frac{k^2}{T}.
\end{equation}
Equality holds---and the first-order tail is minimized---if and only if $\mu_t = k/T$ for all $t \in [T]$. Thus, in the $\sigma \to \infty$ regime, any optimal subsampling scheme must have uniform marginal participation probabilities across all iterations. Both Poisson and Balanced Iteration Subsampling fall into this category.

\textbf{Second Layer: Eliminating Participation Variance (Again).}
Restricting our focus to the optimal class of uniform-marginal schemes (where $\mu = p\ones$ and $p = k/T$), we now analyze the $\Theta(1/\sigma^2)$ term in \eqref{eq:asymptotic_L}. To understand how these second-order fluctuations interact with the dominant first-order term, we orthogonally decompose the standard normal noise vector as $w = z \frac{\ones}{\sqrt{T}} + w_{\perp}$, where $z = \frac{1}{\sqrt{T}} \inner{\ones, w} \sim \Normal(0, 1)$ represents the magnitude of the noise along $\ones$, and $w_{\perp}$ lies in the orthogonal subspace and is independent of $z$.

Substituting this decomposition and $\mu = p\ones$ into the first-order term in \eqref{eq:asymptotic_L} yields $\frac{\inner{\mu, w}}{\sigma} = \frac{p \sqrt{T}}{\sigma} z$. The hockey-stick divergence is governed only by the tail where $L(y) > \varepsilon$. In the high-noise limit, the first-order term determines the magnitude of $L(y)$, meaning this tail event requires $\frac{p \sqrt{T}}{\sigma} z \gtrsim \varepsilon$. For the mechanism to achieve a fixed cryptographically small privacy target $\delta$ (typically $10^{-10}$ to $10^{-5}$), $z$ must reach a critical threshold $z^* \approx \Phi^{-1}(1-\delta)$, which is a positive constant bounded away from $0$ (e.g., $z^* \approx 4.26$ for $\delta = 10^{-5}$).

We also substitute the decomposition of $w$ into the quadratic form of the second-order term in \eqref{eq:asymptotic_L}, which expands into three components:
\begin{equation}
\label{eq:second_order_term_decomp}
\frac{w^T \Sigma w}{2\sigma^2} = \frac{1}{2\sigma^2} \left( z^2 \frac{\ones^T \Sigma \ones}{T} + 2z \frac{\ones^T \Sigma w_{\perp}}{\sqrt{T}} + w_{\perp}^T \Sigma w_{\perp} \right),
\end{equation}
which we analyze in this critical tail regime where $z \gtrsim z^*$. The total variance is fixed at $\trace(\Sigma) = Tp(1-p)$ for any uniform-marginal scheme. Because we are operating in the tail where $z \gtrsim z^* \ge 4.26$, the coefficients $z^2$ and $z$ heavily amplify any variance or covariance associated with the $\ones$ direction in \eqref{eq:second_order_term_decomp}. In contrast, variance allocated to any orthogonal direction ($w_{\perp}$) is unscaled by $z$. To suppress the dominant worst-case tail, an optimal scheme must shift all variance out of the $\ones$ direction by setting $\ones^T \Sigma \ones = \var \big(\sum_{t=1}^T x_t \big) = 0$. By the positive semi-definiteness of $\Sigma$, this guarantees $\Sigma \ones = \zero$, which nullifies both the amplified parallel and cross components simultaneously. This zero-variance condition is uniquely achieved if and only if the participation count is deterministically fixed to $k$.

\textbf{Third Layer: Enforcing Pairwise Symmetry.}
Once the participation variance is eliminated ($\ones^T \Sigma \ones = 0 \implies \Sigma \ones = \zero$), the only remaining fluctuation in \eqref{eq:second_order_term_decomp} is the orthogonal component $\frac{1}{2\sigma^2} w_{\perp}^T \Sigma w_{\perp}$. Since the $\ones$ direction consumes none of the variance, the entirety of the fixed $\trace(\Sigma) = Tp(1-p)$ budget is distributed across the remaining $T-1$ orthogonal dimensions. By the Laurent--Massart bound \cite{laurent2000adaptive}, the upper tail of this remaining quadratic form is governed by the $\ell_2$ and $\ell_\infty$ norms of its eigenvalues. Suppressing this worst-case tail requires simultaneously minimizing both norms subject to the fixed trace constraint. This is uniquely achieved when the variance is distributed equally across all non-zero orthogonal directions, forcing the optimal covariance matrix to be isotropic over the subspace orthogonal to $\ones$, thus giving $\Sigma = \frac{Tp(1-p)}{T-1} \left( I_T - \frac{1}{T}\ones\ones^T \right)$.

Balanced Iteration Subsampling achieves this optimum as its combinatorial symmetry treats all participation patterns identically. In contrast, artificial designs, such as placing one participation in every block of $T/k$ iterations, inflate these eigenvalue norms, yielding a heavier privacy loss tail.

The limiting benchmark for these mechanisms can be understood as a scaled full-batch update, where the gradient sum is multiplied by $k/T$ to match the expected signal of the subsampled mechanisms. This is mathematically equivalent to relaxing the binary constraint $x \in \{0, 1\}^T$ and replacing the random participation vector with a fractional, deterministic vector $x = \frac{k}{T}\mathbf{1}$. In this deterministic case, the covariance matrix vanishes ($\Sigma = 0$), eliminating the second-order fluctuations in the privacy loss. This zero-variance configuration represents the fundamental full-batch lower bound approached by uniform-marginal subsampling schemes as $\sigma \to \infty$.

This spectral analysis also explains why the privacy guarantees of subsampled mechanisms converge toward those of this scaled full-batch update in the high-noise limit, a phenomenon observed empirically in prior work \cite{wang2019subsampled, dong2022gaussian, ponomareva2023dp}. As $\sigma \to \infty$, the first-order term is the dominant driver of privacy loss, and any scheme with uniform marginals, including both Poisson subsampling and BIS, matches this term. The advantage of BIS appears in the second-order term: by fixing the participation count, it removes fluctuations in the all-ones direction and thereby minimizes the remaining worst-case tail among uniform-marginal schemes. However, because this second-order advantage is asymptotically dominated by the shared first-order term, BIS and Poisson become numerically indistinguishable in the high-noise limit, both matching the fundamental lower bound.

\subsection{When Should BIS Improve Over Poisson in Practice?}
\label{sec:theoretical_comparison}

The two asymptotic regimes lead to a simple practical prediction. In the low-noise regime ($\sigma \to 0$), privacy loss is dominated by fluctuations in the total number of times a sample participates. BIS eliminates this participation-count variance, while Poisson subsampling leaves it unconstrained. We therefore expect BIS to exhibit a substantial privacy advantage over Poisson subsampling, with the gap becoming more pronounced as $\sigma$ decreases.

In the high-noise regime ($\sigma \to \infty$), the dominant term is instead determined by the marginal participation probabilities across iterations. Since both BIS and Poisson subsampling have uniform marginals, they match at the leading order. Although BIS remains theoretically optimal by minimizing the second-order fluctuations, this advantage becomes numerically negligible relative to the shared first-order term. Thus, the privacy guarantees of BIS and Poisson should converge in the high-noise limit, approaching the scaled full-batch lower bound.

Combining these two limits, BIS is optimal at both ends of the noise spectrum within the family of independent-example mechanisms (Definition~\ref{def:indep_example}). More importantly, the theory gives a concrete empirical prediction: BIS should provide a visible privacy gain over Poisson subsampling in low-noise, high-utility regimes, while the gap should narrow and vanish as the noise level increases. We verify this trajectory in Section~\ref{sec:evaluation}.

\section{Near-Exact Analysis of BIS via Monte Carlo Accounting}
\label{sec:accounting}

To empirically observe the theoretical optimality derived in Section~\ref{sec:optimality}, we require a privacy accountant with minimal analytical slack. Existing privacy analyses for Balanced Iteration Subsampling rely on R\'enyi differential privacy (RDP) or privacy loss distributions (PLDs) via composition bounds \cite{dong2025leveraging, feldman2025privacy, feldman2026efficient}. Because these methods pay the cost of translating RDP to $(\varepsilon, \delta)$-DP or bound the joint mechanism by composing intermediate guarantees, they can introduce analytical looseness that may obscure the tight privacy cost.

We therefore turn to Monte Carlo accounting to provide near-exact privacy guarantees. As established in Section~\ref{sec:background_mc_accounting}, the primary computational bottleneck in Monte Carlo accounting is evaluating the exact likelihood ratio $P(y)/Q(y)$ for any given $y$. Under Balanced Iteration Subsampling, a naive evaluation of this ratio requires computing all $\binom{T}{k}$ valid combinations, which is intractable for realistic training horizons.

In this section, we develop an efficient computational paradigm that makes near-exact Monte Carlo accounting practical for Balanced Iteration Subsampling. We achieve this by combining an $\mc{O}(Tk)$ dynamic program with an ultra-fast $\mc{O}(T)$ screening bound, effectively shifting the computational complexity from $ \mc{O}\big(\binom{T}{k}\big)$ to a time of $\mc{O}(T + \delta Tk)$ per sample heuristically.

\subsection{Exact Likelihood Ratio via Dynamic Programming}

We consider the likelihood ratio between the two neighboring datasets under zero-out adjacency. On the dataset where the differing example is absent, the mechanism output is purely noise,
$$ Q = \mc{N}(0,\sigma^2 I_T). $$
On the adjacent dataset, Balanced Iteration Subsampling chooses exactly $k$ of the $T$ iterations uniformly at random, and the differing example contributes to those $k$ iterations. Thus, the output distribution is
$$ P = \frac{1}{\binom{T}{k}} \sum_{\substack{S \subseteq [T] \\ |S| = k}} \mc{N}(\mathbf{1}_S,\sigma^2 I_T), $$
where $\mathbf{1}_S \in \{0,1\}^T$ is the indicator vector of the set $S$.

For a specific realization $y = (y_1,\dots,y_T)$, the likelihoods evaluate to
$$ Q(y) = \prod_{i=1}^T \phi_\sigma(y_i), \quad\quad P(y) = \frac{1}{\binom{T}{k}} \sum_{\substack{S \subseteq [T] \\ |S| = k}} \prod_{i \in S} \phi_\sigma(y_i-1) \prod_{i \notin S} \phi_\sigma(y_i), $$
where $\phi_\sigma(\cdot)$ is the probability density function of $\mc{N}(0,\sigma^2)$. Dividing the two and simplifying the Gaussian density ratio yields
$$ \frac{P(y)}{Q(y)} = \frac{1}{\binom{T}{k}} \sum_{\substack{S \subseteq [T] \\ |S| = k}} \prod_{i \in S} \frac{\phi_\sigma(y_i-1)}{\phi_\sigma(y_i)}. $$
Defining a weight $ w_i := \exp\left(\frac{2y_i-1}{2\sigma^2}\right)$ for $i=1,\dots,T$, the log-likelihood ratio simplifies to
\begin{equation}
\label{eq:log_likelihood_w}
\log \frac{P(y)}{Q(y)} = \log \left( \frac{1}{\binom{T}{k}} \sum_{\substack{S \subseteq [T] \\ |S| = k}} \prod_{i \in S} w_i \right).
\end{equation}

The key observation to bypass the $\binom{T}{k}$ combinatorial explosion is that after fixing a realization $y$, the remaining contribution to the likelihood ratio from iteration $t$ onward depends only on two state variables: the number of iterations remaining, and the number of participations still to place. 

For any starting iteration $t \in \{1,\dots,T+1\}$ and remaining participations $r \in \{0,\dots,k\}$, we define the total suffix contribution as:
$$ F(t,r) = \sum_{\substack{S \subseteq \{t,\dots,T\} \\ |S| = r}} \prod_{i \in S} w_i. $$
This yields a clean dynamic programming recursion. For the suffix starting at $t$, either iteration $t$ is not selected (contributing $F(t+1,r)$), or iteration $t$ is selected (contributing $w_t F(t+1,r-1)$). The recursion is therefore:
$$ F(t,r) = F(t+1,r) + w_t F(t+1,r-1), \qquad 1 \le t \le T,\; 1 \le r \le k, $$
with boundary conditions $F(t,0) = 1$ for all $t$ (all participations placed), and $F(T+1,r) = 0$ for all $r > 0$ (no iterations remaining to place participations). 

The exact log-likelihood ratio is then extracted as $\log F(1,k) - \log \binom{T}{k}$. This dynamic program maintains $(T+1)(k+1)$ states, reducing the time complexity of exactly evaluating the likelihood ratio for a single sample to $\mc{O}(Tk)$.

\paragraph{Implementation Note.} We employ appropriate stabilization mechanisms to prevent numerical overflows during the evaluation of the dynamic programming recurrence. Also, the spatial footprint can be reduced to $\mc{O}(k)$ by storing a 1D array of the current state variables and updating it iteratively. While these standard techniques are not explicitly detailed in the text, our complete implementation is provided in the supplementary material for exact reproducibility.

\subsection{A Fast Screening Bound}

While the $\mc{O}(Tk)$ dynamic program is a massive improvement, evaluating it for millions of samples during Monte Carlo verification is still computationally demanding. The real break-through is that to further accelerate verification, we derive an $\mc{O}(T)$ upper bound to act as a computational filter.

\begin{lemma}
\label{lem:screening_bound}
For any nonnegative $u_1,\dots,u_n$ and any $r \in \{0,\dots,n\}$,
$$ A_{n,r}(u_1,\dots,u_n) \coloneqq \frac{1}{\binom{n}{r}} \sum_{\substack{S \subseteq [n] \\ |S| = r}} \prod_{i \in S} u_i \le \left(\frac{1}{n}\sum_{i=1}^n u_i\right)^r. $$
\end{lemma}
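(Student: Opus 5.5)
This is Maclaurin's inequality: writing $e_r$ for the $r$-th elementary symmetric polynomial and $A_{n,r} = e_r/\binom{n}{r}$ for the normalized version, the claim is $A_{n,r} \le A_{n,1}^r$. The plan is to reduce this to Newton's inequalities $A_{n,r}^2 \ge A_{n,r-1}A_{n,r+1}$, valid for all real $u_i$, and then telescope. The cases $r=0$ (both sides equal $1$) and $r=1$ (equality) are immediate.

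\textbf{Step 1: Newton's inequalities.} Consider the polynomial $p(x) = \prod_{i=1}^n (x+u_i) = \sum_{j} \binom{n}{j} A_{n,j}\, x^{n-j}$, which has only real roots. By Rolle's theorem, every derivative of a real-rooted polynomial is real-rooted, and so is its reversal $x^{\deg} p(1/x)$. Alternately differentiating and reversing isolates any three consecutive normalized coefficients as the quadratic
\[
A_{n,r-1}x^2 + 2A_{n,r}x + A_{n,r+1}
\]
(up to a positive constant), with the normalization by $\binom{n}{r}$ preserved at each step. Real-rootedness of this quadratic forces a nonnegative discriminant, which gives $A_{n,r}^2 \ge A_{n,r-1}A_{n,r+1}$. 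The main obstacle is the bookkeeping showing that the binomial normalizations survive differentiation and reversal. To avoid it, one can use a short induction on $n$ instead: $A_{n,j}(u) = \tfrac{n-j}{n}A_{n-1,j}(u') + \tfrac{j}{n}u_n A_{n-1,j-1}(u')$, where $u' = (u_1,\dots,u_{n-1})$. Alternatively, one can simply cite Newton's inequalities as classical.

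\textbf{Step 2: telescoping for nonnegative $u$.} If some $A_{n,j}=0$ with $j\le r$, then fewer than $j$ of the $u_i$ are positive, so $A_{n,r}=0$ and the bound is trivial. Otherwise all $A_{n,j}>0$ for $j \le r$. Newton's inequalities then say the ratio $A_{n,j}/A_{n,j-1}$ is nonincreasing in $j$, so each such ratio is at most $A_{n,1}/A_{n,0} = A_{n,1}$. Multiplying these bounds for $j=1,\dots,r$ gives
\[
A_{n,r} = \prod_{j=1}^r \frac{A_{n,j}}{A_{n,j-1}} \le A_{n,1}^r = \Bigl(\tfrac1n\sum_i u_i\Bigr)^r,
\]
which is the statement of Lemma~\ref{lem:screening_bound}.

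An alternative route that avoids Newton's inequalities is a smoothing argument. For fixed sum $\sum_i u_i$, replacing a pair $(u_a,u_b)$ by their average changes $e_r$ by $\tfrac{(u_a-u_b)^2}{4}\,e_{r-2}(\text{others}) \ge 0$, so $e_r$ is maximized at equal $u_i$. A compactness/continuity argument on the simplex then handles the limit of repeated smoothing, and at equal $u_i = \bar u$ we have $A_{n,r} = \bar u^r$, which gives the bound directly. I would present this smoothing proof as the primary argument, since it uses nonnegativity naturally and is fully elementary. The only delicate point is the limit step, which can be handled by taking a maximizer of $e_r$ on the compact set $\{u\ge 0,\ \sum_i u_i = s\}$. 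If that maximizer had two unequal positive coordinates, averaging them would not decrease $e_r$, so some maximizer has all coordinates equal.
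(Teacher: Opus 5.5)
Your smoothing argument, which you say you would present as the primary proof, is the paper's proof. The paper writes $A_{n,r}$ as $C_0 + C_1(u_a+u_b) + C_2 u_a u_b$ with $C_0,C_1,C_2 \ge 0$ and uses $u_a(s-u_a)\le s^2/4$. That is exactly your increment $\frac{(u_a-u_b)^2}{4}e_{r-2}(\text{others})$. Your compactness treatment of the limit is more careful than the paper, which just asserts that repeated averaging converges to the constant vector. As written, though, your last step needs one more line.

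If averaging merely does not decrease $e_r$ at a maximizer, you only obtain another maximizer, so ``some maximizer has all coordinates equal'' does not yet follow. Also, restricting to two unequal \emph{positive} coordinates leaves maximizers such as $(s/m,\dots,s/m,0,\dots,0)$ with $m<n$ unaddressed. Strictness fixes both. For $s>0$ and $r\ge 2$, any maximizer $u^*$ has $e_r(u^*)\ge \binom{n}{r}(s/n)^r>0$, so at least $r$ of its coordinates are positive. Hence for every pair $a,b$ the remaining coordinates have $e_{r-2}>0$. Averaging any unequal pair, zero entries included, then strictly increases $e_r$, which is a contradiction. Alternatively, among the maximizers take one minimizing $\sum_i u_i^2$; averaging an unequal pair strictly decreases that quantity.

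Your Newton--Maclaurin route is a genuinely different and correct argument. The differentiate-and-reverse bookkeeping does work: after $n-r-1$ derivatives the coefficient of $x^{r+1-j}$ is $\frac{n!}{(r+1)!}\binom{r+1}{j}A_{n,j}$. Reversing and differentiating $r-1$ more times then leaves $\frac{n!}{2}(A_{n,r+1}x^2+2A_{n,r}x+A_{n,r-1})$, the reversal of your quadratic. Your telescoping, with the zero case split off, is sound.

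Compared with smoothing, this route needs no extremal or limiting argument, and it places the lemma inside the full Maclaurin chain. The price is importing Rolle's theorem and real-rootedness, or citing Newton's inequalities. Smoothing, by contrast, is self-contained and uses nonnegativity directly.

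One caution concerns the ``short induction on $n$'' via $A_{n,j}=\frac{n-j}{n}A_{n-1,j}+\frac{j}{n}u_nA_{n-1,j-1}$. It is not short. Viewed as a quadratic in $u_n$, the expression $A_{n,j}^2-A_{n,j-1}A_{n,j+1}$ can have a negative linear coefficient; for $n=3$, $j=1$ it is $-\frac{2}{9}A_{2,1}$. A discriminant or completing-the-square estimate is therefore needed rather than a coefficientwise comparison, so citing Newton's inequalities is the cleaner option.
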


\begin{proof}
Fix all coordinates except $u_a$ and $u_b$. Since each term in the sum contains any coordinate at most once, the quantity $A_{n,r}(u_1,\dots,u_n)$ is an affine function of each variable separately. As a function of the pair $(u_a,u_b)$, it takes the form
$$ C_0 + C_1(u_a+u_b) + C_2 u_a u_b, $$
where $C_0,C_1,C_2 \ge 0$ depend only on the remaining coordinates.

Now fix the sum $u_a+u_b=s$. Then the expression becomes
$$ C_0 + C_1 s + C_2 u_a(s-u_a). $$
Because
$$ u_a(s-u_a) \le \frac{s^2}{4} $$
(with equality at $u_a = u_b = s/2$), replacing $(u_a,u_b)$ with their average pair
$$ \left(\frac{u_a+u_b}{2}, \frac{u_a+u_b}{2}\right) $$
cannot decrease $A_{n,r}$.

Applying this averaging operation repeatedly to all pairs, while preserving the total sum $\sum_{i=1}^n u_i$, we eventually converge to the limit at the constant vector $(\bar{u},\dots,\bar{u})$, where
$$ \bar{u} = \frac{1}{n}\sum_{i=1}^n u_i. $$
The function $A_{n,r}$ can only increase along the way, meaning
$$ A_{n,r}(u_1,\dots,u_n) \le A_{n,r}(\bar{u},\dots,\bar{u}). $$
When all coordinates equal $\bar{u}$, every $r$-fold product is $\bar{u}^r$, yielding
$$ A_{n,r}(\bar{u},\dots,\bar{u}) = \bar{u}^r, $$
which completes the proof.
\end{proof}

Applying Lemma \ref{lem:screening_bound} with $u_i = w_i$ yields a fast upper bound on the log-likelihood ratio,
\begin{equation}
\label{eq:fast_bound}
\log \frac{P(y)}{Q(y)} \le k \log \left(\frac{1}{T}\sum_{i=1}^T w_i\right).
\end{equation}
Evaluating \eqref{eq:fast_bound} requires only a single pass over the $T$ iterations, with a computation time of $\mc{O}(T)$.

\subsection{The Screen-Then-Exact Paradigm}

We do not use this $\mc{O}(T)$ upper bound as a loose replacement for the exact likelihood ratio; rather, we deploy it as a computational filter. 

In Monte Carlo accounting, a sample $y$ contributes a non-zero value to the empirical hockey-stick divergence $\hat{\delta}$ only if its exact log-likelihood ratio exceeds the target privacy budget $\varepsilon$. Therefore, if the $\mc{O}(T)$ upper bound in \eqref{eq:fast_bound} evaluates to a value $\le \varepsilon$, it guarantees that the exact log-likelihood ratio is also $\le \varepsilon$. In this case, the exact likelihood ratio is irrelevant to the final sum—we can increment a ``zero-value counter'' and discard the sample, bypassing the dynamic program entirely. The more expensive $\mc{O}(Tk)$ exact evaluation is triggered only when the fast upper bound exceeds $\varepsilon$. 

Empirically, we observe that this screening bound is highly effective. When Monte Carlo accounting is run at a noise level that yields roughly $(\varepsilon,\delta)$-DP, the upper bound only exceeds $\varepsilon$ for approximately an $\mc{O}(\delta)$ fraction of the sampled realizations. Consequently, almost all samples are processed in $\mc{O}(T)$ time, reducing the heuristic expected runtime of the near-exact accountant to $\mc{O}(T + \delta T k)$ per sample, making large-scale privacy verification practical.

\section{Empirical Evaluation}
\label{sec:evaluation}

We now empirically verify the theoretical optimality derived in Section \ref{sec:optimality}. By deploying our near-exact Monte Carlo accountant developed in Section \ref{sec:accounting}, we can strip away the analytical slack of prior methods to observe the true privacy-noise profile of Balanced Iteration Subsampling. 

In Table~\ref{tab:main_results}, we compare the minimum noise multiplier $\sigma$ required to achieve a target $(\varepsilon,\delta)$ privacy guarantee under four different accounting methods:
\begin{itemize}
    \item Privacy loss distribution (PLD) accounting for Poisson subsampling \cite{jax-privacy2022github}, which is numerically exact and serves as the state-of-the-art baseline. This is labeled as ``Poisson''.
    \item The R\'enyi differential privacy (RDP) analysis for Balanced Iteration Subsampling \cite{dong2025leveraging}. This is labeled as ``BIS-RDP''.
    \item The recent PLD-based random allocation accountant, which upper bounds $k$-out-of-$T$ sampling via $k$-fold composition of a $1$-out-of-$\lfloor T/k \rfloor$ allocation \cite{feldman2026efficient, shenfeld2026pld_accounting}. This is labeled as ``RA-PLD''.
    \item Our Monte Carlo accounting for Balanced Iteration Subsampling. This is labeled as ``BIS-MC''.
\end{itemize}

The parameter configurations used in our experiments are drawn from prior empirical studies spanning a diverse range of DP training tasks \cite{de2022unlocking, dong2025leveraging, sander2023tan, li2021large, abadi2016deep}. In real-world applications, the hyperparameters $(\varepsilon, \delta, T, k)$ are not chosen in isolation; rather, they are deeply interdependent, dictated by the holistic dynamics of model convergence, dataset scale, and computational constraints. By adopting explicitly documented training configurations, our evaluation avoids arbitrarily constructed parameter grids that often produce synthetic, unrepresentative edge cases. Instead, it captures the true joint distribution of parameters that practitioners actually navigate, ensuring that the empirical advantages of Balanced Iteration Subsampling are directly realizable in practical, end-to-end training pipelines.

To obtain end-to-end $(\varepsilon, \delta)$-DP guarantees for our Monte Carlo method, we employ a sequential Estimate-Verify-Release (EVR) framework \cite{dong2026privacy}, performing a discrete line search over candidate noise multipliers in decrements of one unit of the third significant digit. We report both a \textit{certified} and an \textit{optimistic} variant of our Monte Carlo accounting. The certified approach yields a rigorous privacy guarantee by absorbing the statistical failure probability into $\delta$. The optimistic procedure instead uses the average of the samples, so it does not satisfy formal end-to-end DP. However, it isolates the statistical slack in finite sampling so serves as an estimate of the ground-truth privacy cost of BIS and represents the fundamental limit that the certified bound converges toward given an increasing amount of compute. It also demonstrates the noise multipliers that would be formally attainable if a closed-form analysis were discovered.

\begin{longtable}{c c c c c c c c c}
\caption{Minimum noise multiplier $\sigma$ required to achieve a target $(\varepsilon,\delta)$ under different accounting methods. Bold entries indicate the lowest noise among formal end-to-end DP methods, including any values within one unit of the third significant digit for our line search precision (a relative difference of $\le$ 1\%). The optimistic Monte Carlo approach lacks a formal DP guarantee, but yields \textit{the lowest noise multiplier across all regimes}.}
\label{tab:main_results}\\
\toprule
$\varepsilon$ & $\delta$ & $T$ & $k$
& Poisson & BIS-RDP & RA-PLD
& \shortstack{BIS-MC \\ {\scriptsize (certified)}}
& \shortstack{BIS-MC \\ {\scriptsize (optimistic)}} \\
\midrule
\endfirsthead

\toprule
$\varepsilon$ & $\delta$ & $T$ & $k$
& Poisson & BIS-RDP & RA-PLD
& \shortstack{BIS-MC \\ {\scriptsize (certified)}}
& \shortstack{BIS-MC \\ {\scriptsize (optimistic)}} \\
\midrule
\endhead

8.0 & $8.33 \cdot 10^{-6}$ & 176 & 3 & 0.560 & 0.588 & 0.521 & \textbf{0.506} & 0.505 \\
8.0 & $1.25 \cdot 10^{-5}$ & 391 & 5 & 0.566 & 0.610 & 0.534 & \textbf{0.521} & 0.521 \\
8.0 & $1.25 \cdot 10^{-5}$ & 195 & 5 & 0.617 & 0.669 & 0.583 & \textbf{0.566} & 0.565 \\
3.0 & $8.33 \cdot 10^{-6}$ & 176 & 3 & 0.787 & 1.02 & 0.750 & \textbf{0.738} & 0.736 \\
8.0 & $1.25 \cdot 10^{-5}$ & 781 & 10 & 0.606 & 0.688 & 0.587 & \textbf{0.577} & 0.576 \\
8.0 & $1.25 \cdot 10^{-5}$ & 391 & 10 & 0.691 & 0.772 & 0.654 & \textbf{0.639} & 0.638 \\
3.0 & $1.25 \cdot 10^{-5}$ & 391 & 5 & 0.772 & 1.06 & 0.748 & \textbf{0.740} & 0.738 \\
3.0 & $1.25 \cdot 10^{-5}$ & 195 & 5 & 0.898 & 1.19 & 0.852 & \textbf{0.838} & 0.837 \\
3.0 & $1.25 \cdot 10^{-5}$ & 6250 & 10 & 0.609 & 0.874 & \textbf{0.597} & \textbf{0.596} & 0.595 \\
3.0 & $1.25 \cdot 10^{-5}$ & 781 & 10 & 0.855 & 1.22 & 0.829 & \textbf{0.822} & 0.820 \\
3.0 & $1.25 \cdot 10^{-5}$ & 391 & 10 & 1.03 & 1.41 & 0.981 & \textbf{0.970} & 0.968 \\
3.0 & $1.25 \cdot 10^{-5}$ & 195 & 10 & 1.30 & 1.69 & 1.22 & \textbf{1.20} & 1.20 \\
8.0 & $1.25 \cdot 10^{-5}$ & 2344 & 30 & 0.717 & 0.860 & 0.702 & \textbf{0.695} & 0.694 \\
8.0 & $1.25 \cdot 10^{-5}$ & 1172 & 30 & 0.840 & 1.00 & 0.817 & \textbf{0.806} & 0.805 \\
8.0 & $1.25 \cdot 10^{-5}$ & 3906 & 50 & 0.803 & 0.974 & 0.780 & \textbf{0.774} & 0.773 \\
8.0 & $1.25 \cdot 10^{-5}$ & 1953 & 50 & 0.961 & 1.15 & 0.933 & \textbf{0.924} & 0.923 \\
1.26 & $10^{-5}$ & 1000 & 10 & 1.22 & 2.09 & \textbf{1.19} & \textbf{1.18} & 1.18 \\
3.0 & $1.25 \cdot 10^{-5}$ & 2344 & 30 & 1.12 & 1.59 & \textbf{1.09} & \textbf{1.09} & 1.08 \\
3.0 & $1.25 \cdot 10^{-5}$ & 1172 & 30 & 1.43 & 1.94 & \textbf{1.40} & \textbf{1.39} & 1.38 \\
1.0 & $10^{-5}$ & 164 & 13 & 3.98 & 5.42 & 4.02 & \textbf{3.86} & 3.85 \\
3.0 & $1.25 \cdot 10^{-5}$ & 3906 & 50 & 1.32 & 1.85 & \textbf{1.30} & \textbf{1.30} & 1.29 \\
3.0 & $1.25 \cdot 10^{-5}$ & 1953 & 50 & 1.74 & 2.31 & \textbf{1.72} & \textbf{1.71} & 1.70 \\
6.0 & $10^{-5}$ & 1843 & 151 & 2.80 & 3.30 & 2.82 & \textbf{2.77} & 2.77 \\
8.0 & $10^{-5}$ & 2468 & 202 & 2.56 & 2.96 & 2.57 & \textbf{2.53} & 2.53 \\
8.0 & $10^{-5}$ & 10000 & 205 & \textbf{1.42} & 1.69 & \textbf{1.42} & \textbf{1.41} & 1.40 \\
8.0 & $10^{-5}$ & 2500 & 205 & 2.58 & 2.98 & 2.59 & \textbf{2.55} & 2.55 \\
8.0 & $8 \cdot 10^{-7}$ & 1000 & 205 & 4.37 & 4.92 & 4.79 & \textbf{4.33} & 4.30 \\
1.5 & $10^{-5}$ & 531 & 43 & 4.94 & 6.43 & 5.05 & \textbf{4.88} & 4.87 \\
7.1 & $2 \cdot 10^{-5}$ & 2500 & 205 & 2.75 & 3.23 & 2.77 & \textbf{2.73} & 2.72 \\
13.7 & $4 \cdot 10^{-5}$ & 2500 & 410 & 3.12 & 3.49 & 3.14 & \textbf{3.10} & 3.09 \\
0.5 & $10^{-5}$ & 781 & 16 & 4.15 & 6.37 & 4.30 & \textbf{4.11} & 4.09 \\
4.0 & $10^{-5}$ & 1687 & 138 & 3.73 & 4.52 & 3.78 & \textbf{3.71} & 3.69 \\
7.3 & $2.5 \cdot 10^{-5}$ & 2500 & 256 & 3.29 & 3.83 & 3.42 & \textbf{3.26} & 3.26 \\
2.0 & $10^{-5}$ & 906 & 74 & 5.00 & 6.36 & 5.11 & \textbf{4.97} & 4.95 \\
3.0 & $10^{-5}$ & 1593 & 130 & 4.63 & 5.70 & 4.71 & \textbf{4.60} & 4.58 \\
4.0 & $10^{-5}$ & 2171 & 178 & 4.23 & 5.10 & 4.29 & \textbf{4.20} & 4.19 \\
2.0 & $10^{-5}$ & 1125 & 92 & 5.56 & 7.05 & 5.70 & \textbf{5.53} & 5.51 \\
4.0 & $8 \cdot 10^{-7}$ & 1000 & 205 & \textbf{7.89} & 9.25 & 8.81 & \textbf{7.88} & 7.83 \\
10.0 & $10^{-5}$ & 2000 & 655 & \textbf{7.37} & 8.37 & 7.49 & \textbf{7.36} & 7.34 \\
1.0 & $10^{-5}$ & 875 & 72 & 9.17 & 12.1 & 9.57 & \textbf{9.14} & 9.11 \\
9.0 & $10^{-5}$ & 2000 & 655 & \textbf{8.02} & 9.14 & 8.17 & \textbf{8.02} & 7.99 \\
1.26 & $10^{-5}$ & 10000 & 100 & \textbf{3.12} & 4.42 & 3.23 & \textbf{3.12} & 3.11 \\
8.0 & $10^{-5}$ & 2000 & 655 & \textbf{8.84} & 10.2 & 9.02 & \textbf{8.83} & 8.80 \\
1.0 & $10^{-5}$ & 250 & 82 & \textbf{19.4} & 25.5 & 20.4 & \textbf{19.5} & 19.4 \\
2.0 & $10^{-5}$ & 500 & 164 & \textbf{14.7} & 18.4 & 15.3 & \textbf{14.7} & 14.7 \\
4.0 & $10^{-5}$ & 1000 & 328 & \textbf{11.3} & 13.5 & 11.6 & \textbf{11.3} & 11.3 \\
7.0 & $10^{-5}$ & 2000 & 655 & \textbf{9.87} & 11.4 & 10.1 & \textbf{9.86} & 9.84 \\
0.5 & $8 \cdot 10^{-7}$ & 250 & 51 & \textbf{26.4} & 34.6 & 31.8 & \textbf{26.5} & 26.3 \\
1.0 & $8 \cdot 10^{-7}$ & 500 & 102 & \textbf{19.6} & 24.7 & 22.8 & \textbf{19.7} & 19.5 \\
4.0 & $8 \cdot 10^{-7}$ & 2000 & 409 & \textbf{11.1} & 13.0 & 12.5 & \textbf{11.1} & 11.1 \\
2.0 & $8 \cdot 10^{-7}$ & 1000 & 205 & \textbf{14.7} & 17.8 & 16.8 & \textbf{14.8} & 14.7 \\
6.0 & $10^{-5}$ & 2000 & 655 & \textbf{11.2} & 13.1 & 11.5 & \textbf{11.3} & 11.2 \\
6.0 & $10^{-5}$ & 2007 & 658 & \textbf{11.3} & 13.2 & 11.6 & \textbf{11.3} & 11.3 \\
5.0 & $10^{-5}$ & 2000 & 655 & \textbf{13.1} & 15.5 & 13.5 & \textbf{13.2} & 13.1 \\
4.0 & $10^{-5}$ & 1765 & 578 & \textbf{14.9} & 17.9 & 15.4 & \textbf{15.0} & 14.9 \\
4.0 & $10^{-5}$ & 2000 & 655 & \textbf{15.9} & 19.0 & 16.5 & \textbf{15.9} & 15.9 \\
3.0 & $10^{-5}$ & 1656 & 543 & \textbf{18.6} & 22.7 & 19.4 & \textbf{18.7} & 18.6 \\
2.0 & $10^{-5}$ & 1156 & 379 & \textbf{22.3} & 27.9 & 23.5 & \textbf{22.4} & 22.3 \\
1.0 & $8 \cdot 10^{-7}$ & 1000 & 205 & \textbf{27.7} & 35.0 & 33.2 & 27.9 & 27.7 \\
3.0 & $10^{-5}$ & 2000 & 655 & \textbf{20.4} & 24.9 & 21.4 & \textbf{20.5} & 20.4 \\
1.0 & $10^{-5}$ & 906 & 297 & \textbf{36.9} & 48.4 & 40.7 & \textbf{37.0} & 36.9 \\

\bottomrule
\end{longtable}

All Monte Carlo privacy accounting evaluations were executed on the Stanford Sherlock computing cluster using general-purpose CPU nodes (primarily AMD Milan architecture). By leveraging the highly parallelizable nature of Monte Carlo sampling, each configuration evaluation was distributed across 16 to 32 CPU cores via the Slurm workload manager, with memory allocations ranging from 24 GB to 32 GB per task.

The required wall-clock execution time varied depending on the target privacy budget $\delta$, the total number of iterations $T$, and the expected participation count $k$. Across the 60+ configurations evaluated in Table \ref{tab:main_results}, the time required to determine the reported noise multiplier $\sigma$ ranged from approximately 12 minutes for the smallest configurations to 12 hours for the most computationally demanding limits. The majority of the accounting tasks completed in under 4 hours. Crucially, no specialized accelerator hardware (GPUs or TPUs) was required for any of the evaluations. We contextualize this computational overhead and discuss its practical feasibility in Section \ref{sec:background_mc_accounting}.

\subsection{Verifying the Optimality of Balanced Iteration Subsampling}

To analyze when BIS outperforms Poisson subsampling, we investigated the relationship between training hyperparameters and the relative noise reduction ($1 - \sigma_{\text{BIS-MC}} / \sigma_{\text{Poisson}}$). This empirical advantage is overwhelmingly driven by the ratio $\varepsilon/k$ (Pearson correlation 0.93), with $T/k$ (correlation 0.49) and $\delta/k$ (correlation 0.81) acting as secondary factors. 

This aligns perfectly with our theory. The quantities $\varepsilon/k$ and $\delta/k$ heuristically represent the privacy budget allocated per sample participation, while $T/k$ captures appearance sparsity. High ratios naturally dictate the low-noise regime where, as established in Section \ref{sec:low_noise_optimality}, eliminating participation variance yields the greatest benefit. 

Table \ref{tab:main_results} illustrates this by sorting descending $\varepsilon/k$, transitioning from low- to high-noise limits. In the low-noise upper portion, eliminating participation variance allows certified BIS-MC to reduce the required noise by up to 9.6\% compared to Poisson. While our theory predicts this relative gain diverges to infinity as $\sigma \to 0$, our 9.6\% maximum reflects practical DP training configurations from prior literature rather than fabricated infinitesimally small noise settings. This concretely demonstrates that deterministically controlling participation variance outweighs maximizing unconstrained randomness over the entire $2^T$ hypercube.

Furthermore, as predicted in Section \ref{sec:high_noise_optimality}, certified BIS-MC noise requirements converge with the Poisson baseline in the high-noise limit ($\sigma \to \infty$). The worst-case 0.9\% noise increase under certified BIS-MC is a trivial statistical artifact of certifying the finite-sample bound with a pragmatic 6--15\% $\delta$-budget allocation, rather than artificially inflating compute to force the gap to zero. With industry-scale compute, sampling could easily match the ground truth to three significant figures, making our optimistic Monte Carlo estimates (a 0.902 to 1.000 ratio against Poisson) highly achievable. Ultimately, the BIS mechanism consistently matches or outperforms Poisson subsampling across all evaluated regimes.

We omit end-to-end model training benchmarks. As noted in Section \ref{sec:optimality}, example participation patterns under a fixed expected count do not observably impact model utility \cite{dong2025leveraging, dong2026privacy}. Consequently, BIS at a lower noise level is functionally equivalent to Poisson subsampling at that same reduced noise level. Because training utility monotonically decreases with the noise multiplier, the $\sigma$ reductions in Table \ref{tab:main_results} directly translate to improved utility for a fixed privacy budget, or a reduced budget for a fixed utility.

\subsection{The Necessity of Near-Exact Accounting}

Finally, these results underscore the necessity of the screen-then-exact Monte Carlo framework developed in Section \ref{sec:accounting}. The PLD-based random allocation accountant (``RA-PLD'') relies on $k$-fold composition, introducing analytical slack. While effective in favorable regimes, its high variance requires up to 20.5\% more noise than standard Poisson analysis in some settings. 

By avoiding composition, our accountant eliminates this slack, revealing that prior BIS underperformance was merely an artifact of loose accounting. With near-exact evaluation, BIS proves to be a highly competitive, advantageous approach for high-utility DP-SGD.

\section{Conclusion}
\label{sec:conclusion}

While Poisson subsampling has long been the DP-SGD default, we prove it is suboptimal under a fixed expected participation constraint. In the low-noise limit ($\sigma \to 0$)---the regime critical for high-utility deployment---optimality requires eliminating sample participation variance rather than maximizing unconstrained randomness. Balanced Iteration Subsampling (BIS) structurally guarantees this, while also matching the optimal full-batch noise-to-batch-size ratio in the high-noise limit ($\sigma \to \infty$). Thus, BIS emerges as the optimal independent-example mechanism at both noise extremes.

To translate this asymptotic theory to finite regimes, we introduced a ``screen-then-exact'' Monte Carlo accountant. By combining an $\mc{O}(Tk)$ dynamic program with an $\mc{O}(T)$ screening filter, it bypasses the analytical looseness of existing RDP and composition-based bounds to extract true, tight privacy costs at scale. 

Our empirical evaluations tightly corroborate these predictions: BIS consistently outperforms Poisson in practical low-noise regimes—yielding up to a 9.6\% reduction in the required noise multiplier—and converges to the Poisson baseline and the analytically established full-batch fundamental limit at high noise.

Ultimately, these insights establish BIS as a robust, principled alternative to Poisson subsampling. By demonstrating that structured participation—not unconstrained randomness—drives optimal privacy amplification, our variance-eliminating framework refines a core DP intuition and provides a rigorous blueprint for pushing the Pareto frontier of private training.

\newpage

\iftrue
\section*{Acknowledgments}
Some of the computing for this project was performed on the Sherlock cluster. We would like to thank Stanford University and the Stanford Research Computing Center for providing computational resources and support that contributed to these research results.
\fi

\bibliographystyle{apalike}
\bibliography{references}

\end{document}